\documentclass[11pt, a4paper, copyright]{google}

\usepackage[authoryear, sort&compress, round]{natbib}
\bibliographystyle{abbrvnat}

\usepackage{url}
\usepackage{graphicx}
\usepackage{subcaption}
\usepackage{booktabs}
\usepackage[table]{xcolor}
\usepackage{xspace}
\usepackage{algorithm}
\usepackage{algpseudocode}
\usepackage{amsmath}
\usepackage{amssymb}
\usepackage{amsthm}
\usepackage[
  breaklinks,
  colorlinks,
  linkcolor=violet,
  urlcolor=myblue,
  citecolor=purple
]{hyperref}

\definecolor{myblue}{rgb}{0.21,0.49,0.74}

\usepackage{listings}
\usepackage{tcolorbox}
\tcbuselibrary{listings,skins,breakable}


\usepackage{amsmath,amsfonts,bm}









\def\eqref#1{equation~\ref{#1}}









\def\1{\bm{1}}










\DeclareMathAlphabet{\mathsfit}{\encodingdefault}{\sfdefault}{m}{sl}
\SetMathAlphabet{\mathsfit}{bold}{\encodingdefault}{\sfdefault}{bx}{n}













\newtheorem{assumption}{Assumption}
\newtheorem{proposition}{Proposition}
\newtheorem{theorem}{Theorem}
\newtheorem{remark}{Remark}

\newcommand{\autoenv}{\textsc{GenEnv}\xspace}
\newcommand{\dataevolving}{\textsc{Data-Evolving Paradigm}\xspace}
\newcommand{\alphacurr}{\emph{$\alpha$-Curriculum Reward}\xspace}
\newcommand{\finding}[1]{\noindent\textbf{Finding.} #1}

\definecolor{lightblue}{RGB}{230,240,255}

\uselogo{}

\title{GenEnv: Difficulty-Aligned Co-Evolution Between LLM Agents and Environment Simulators}

\author{
\centering
{\bfseries
Jiacheng Guo\textsuperscript{1*},
Ling Yang\textsuperscript{1*$\dagger$},
Peter Chen\textsuperscript{2*},
Qixin Xiao\textsuperscript{3*}, \\
\textbf{Yinjie Wang\textsuperscript{4}},
\textbf{Xinzhe Juan\textsuperscript{3}},
\textbf{Jiahao Qiu\textsuperscript{1}},
\textbf{Ke Shen},
\textbf{Mengdi Wang\textsuperscript{1$\dagger$}}
\par}

{\small\normalfont\mdseries
\begin{tabular}{c}
\textsuperscript{1}Princeton University \quad
\textsuperscript{2}Columbia University \quad
\textsuperscript{3}University of Michigan \quad
\textsuperscript{4}University of Chicago
\end{tabular}
\par}
{\small\normalfont\mdseries
\makebox[\linewidth][c]{\textsuperscript{*}Equal Contribution \quad \textsuperscript{$\dagger$}Corresponding Authors}
}
}

\begin{document}

\begin{abstract}
\vspace{-2em}
Training capable Large Language Model (LLM) agents is critically bottlenecked by the high cost and static nature of real-world interaction data. We address this by introducing \textbf{\autoenv}, a framework that establishes a \textbf{difficulty-aligned co-evolutionary game} between an agent and a scalable, generative environment simulator. Unlike traditional methods that evolve models on static datasets, \autoenv instantiates a \textbf{\dataevolving}: the simulator acts as a dynamic curriculum policy, continuously generating tasks specifically tailored to the agent's ``zone of proximal development''. This process is guided by a simple but effective \textbf{$\alpha$-Curriculum Reward}, which aligns task difficulty with the agent's current capabilities. We evaluate \autoenv on five benchmarks, including API-Bank, ALFWorld, BFCL, Bamboogle, and TravelPlanner. Across these tasks, \autoenv improves agent performance by up to \textbf{+40.3\%} over 7B baselines and matches or exceeds the average performance of larger models. Compared to Gemini 2.5 Pro-based offline data augmentation, \autoenv achieves better performance while using \textbf{3.3$\times$ less data}. By shifting from static supervision to adaptive simulation, \autoenv provides a data-efficient pathway for scaling agent capabilities. Our codes
are available at \url{https://github.com/Gen-Verse/GenEnv}.
\end{abstract}

\maketitle

\begin{figure}[h]
    \vspace{+2em}
    \centering
    \includegraphics[width=\textwidth]{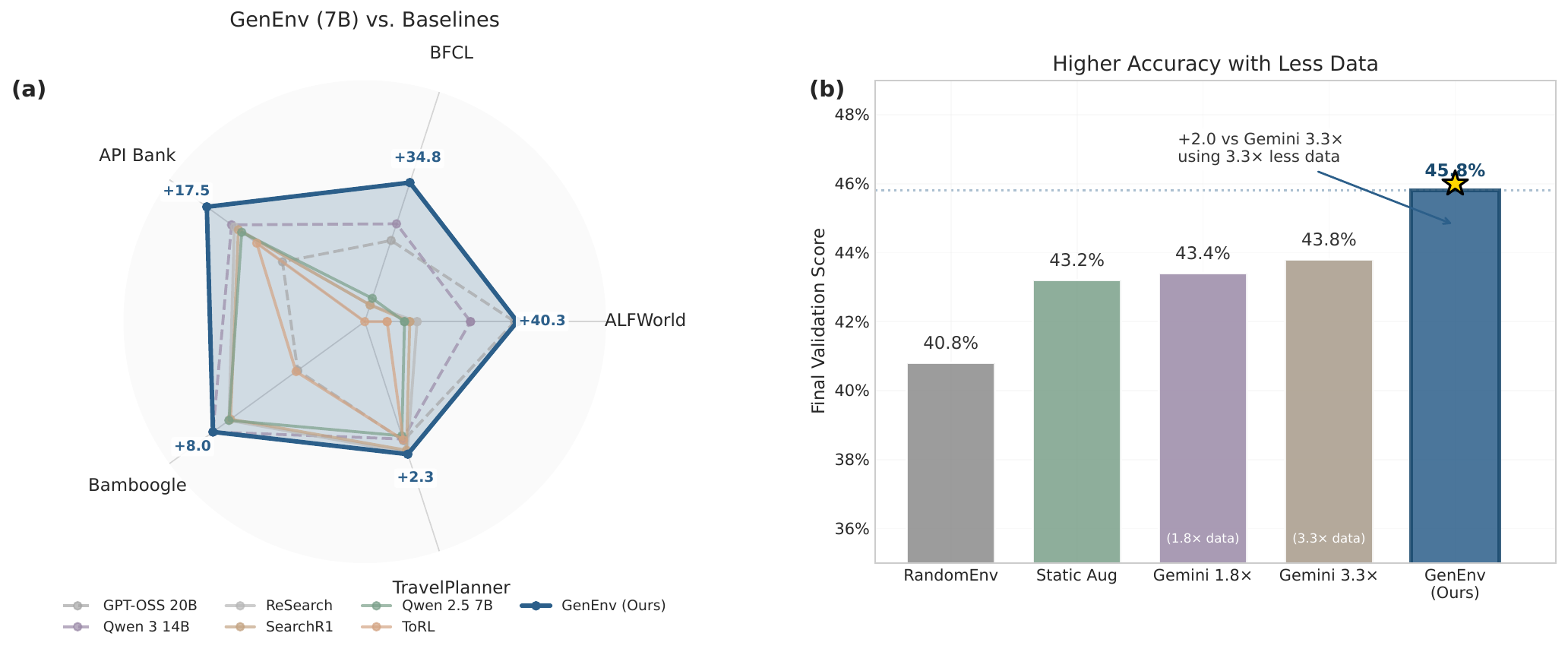}
    \caption{
\textbf{GenEnv’s cross-benchmark gains and data efficiency.}
(a) We compare \autoenv (7B) against representative baselines (Qwen2.5-7B, ReSearch, SearchR1, ToRL) and larger open models (e.g., Qwen3-14B, GPT-OSS-20B).
Blue callouts report the \emph{absolute} improvement of \autoenv over Qwen2.5-7B on each benchmark.
(b) Validation data-efficiency comparison on BFCL: \autoenv surpasses RandomEnv and Static Augmentation, and outperforms Gemini-based offline augmentation even with 3.3$\times$ more synthetic data.
Together, the figure shows that \emph{difficulty-aligned adaptive simulation} can outperform stronger static augmentation baselines under comparable training settings.
}

    \label{fig:fig0_overview}
    \vspace{-0.5em}
\end{figure}

\section{Introduction}
\label{sec:intro}
Training capable Large Language Model (LLM) agents for complex, interactive tasks like web navigation or tool use is constrained by a significant bottleneck: the high cost of data collection through real-world interaction \citep{yao2022, shinn2023, ning2025, wang2025co,wang2025cure}. Each step an agent takes in a real-world environment can be slow, expensive, and difficult to parallelize. For instance, a web agent that navigates an e-commerce site may fail when a button's label changes from ``Add to Cart'' to ``Add to Basket'' \citep{gur2023}, but discovering such failure modes requires extensive and costly real-world exploration. This fragility highlights a key limitation in how these agents are commonly trained.

A central driver of this issue is the reliance on static, pre-collected datasets of expert trajectories \citep{pomerleau1991, levine2020, samadi2024}. Such datasets, no matter how large, represent a fixed snapshot of the world and struggle to capture the wide range of variations an agent will encounter in open-ended environments \citep{levine2020}. Increasing the dataset size alone does not resolve this limitation: the bottleneck often lies not just in data volume, but in the static and costly nature of its collection and its inability to adapt as the agent improves.

\begin{figure}[t]
\centering
\includegraphics[width=\textwidth]{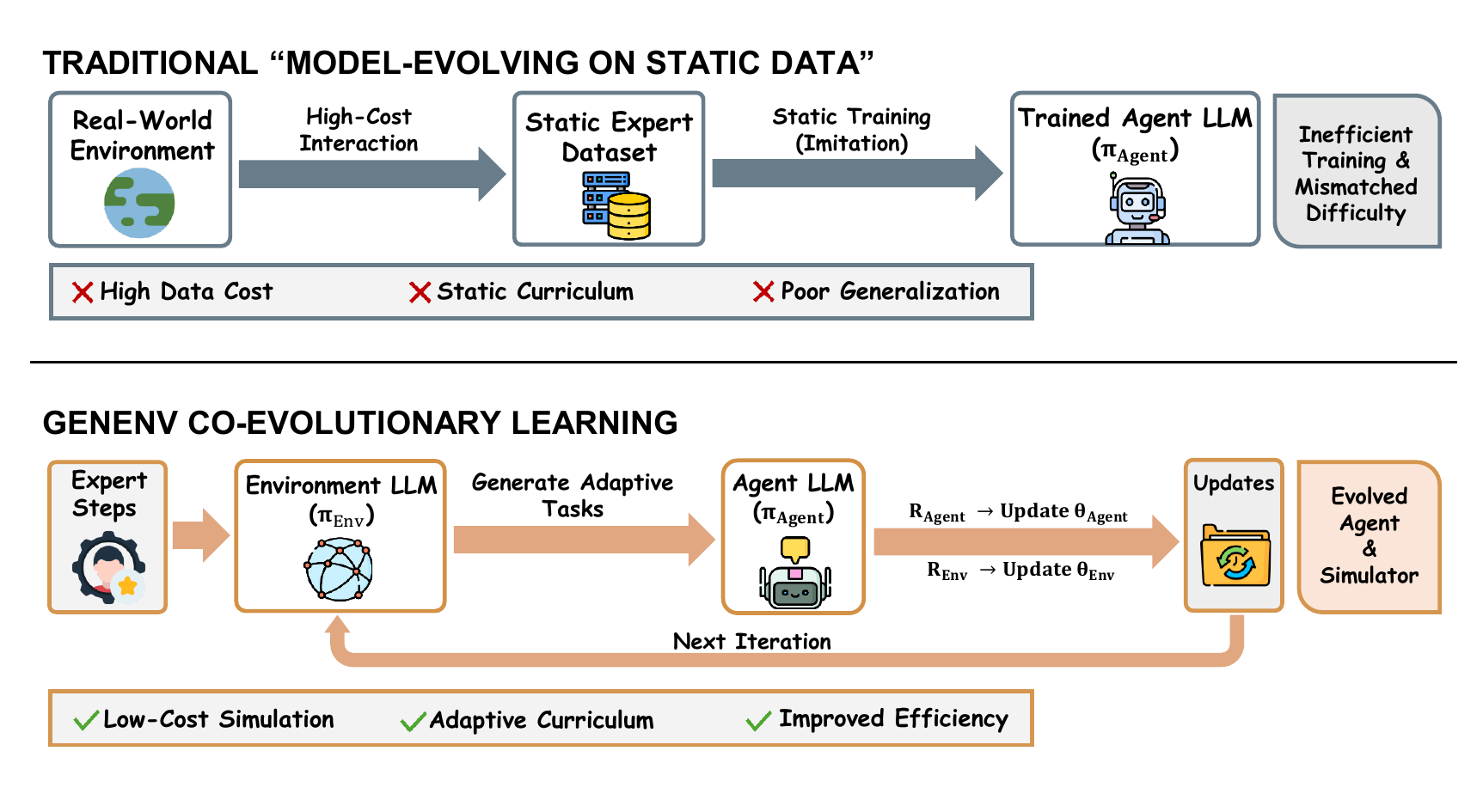}
\caption{A comparison between the traditional training paradigm and our proposed \autoenv framework. The traditional approach (top) relies on high-cost interaction with the real world to create a static dataset, leading to inefficient training and poor generalization. \autoenv (bottom) creates a co-evolutionary loop where an Environment LLM generates adaptive tasks for the Agent LLM, enabling low-cost simulation, an adaptive curriculum, and improved efficiency.}
\label{fig:fig1_overview}
\end{figure}

The challenge of insufficient and static data has led to significant interest in synthetic data generation. However, despite progress, these methods often produce a large but ultimately static corpus that can fail to adapt to the agent's evolving requirements \citep{ye2024, ding2024}. This can result in inefficient training that still does not effectively target the agent's specific weaknesses. The high cost of interaction and data collection remains a core problem.

To address this, we approach the problem differently by proposing \textbf{\autoenv}, a framework built on leveraging an LLM as a \textbf{scalable environment simulator} to reduce interaction costs. Instead of relying on slow and expensive real-world feedback, our framework trains the agent almost entirely within a simulated environment that can generate diverse and relevant training scenarios at a substantially lower cost. As illustrated in Figure~\ref{fig:fig1_overview}, this contrasts with traditional methods that evolve a model on static data. In our approach, a generative environment model is trained to produce an adaptive curriculum of tasks, creating challenges tailored to the agent's performance. This leads to our primary research question: \textit{Can an LLM-based environment simulator provide a scalable, low-cost alternative to real-world interaction for effectively training capable agents?}

In this simulation-centric process, the agent learns to overcome challenges generated by the simulator. The agent's performance provides a natural reward signal that guides the simulator's curriculum generation, allowing both to improve in a self-contained training loop. Throughout the paper, we use ``agent'' and \textbf{Agent Policy} $\pi_{\text{agent}}$ interchangeably, and ``environment simulator'' and \textbf{Environment Policy} $\pi_{\text{env}}$ interchangeably. As previewed in Figure~\ref{fig:fig0_overview}, \autoenv delivers consistent gains over strong 7B baselines across five benchmarks and achieves higher accuracy than Gemini-based offline augmentation while using less synthetic data, highlighting the advantage of difficulty-aligned, adaptive simulation over static scaling of data. Our \textbf{contributions} are:

\begin{itemize}
    \item \textbf{The Data-Evolving Paradigm:} We propose a co-evolutionary framework where the training data distribution adapts dynamically to the agent's learning progress, breaking the reliance on static corpora.
    \item \textbf{Difficulty-Aligned Simulation:} We introduce the \alphacurr, a mechanism that rewards the simulator for generating tasks within the agent's target success zone (akin to the ``zone of proximal development''~\citep{vygotsky1978mind}), ensuring an efficient automated curriculum.
    \item \textbf{Data Efficiency:} On our benchmarks, \autoenv matches or surpasses Gemini 2.5 Pro-based static augmentation pipelines while using \textbf{3.3$\times$ less synthetic data}, suggesting that an adaptive simulator can be more valuable than simply scaling the teacher model.
\end{itemize}

\section{\autoenv: Difficulty-Aligned Co-Evolution}
\label{sec:methodology}

\autoenv views agent training as a \emph{two-player curriculum game} rather than a single-player optimization problem. We maintain two policies: an \textbf{Agent Policy $\pi_{\text{agent}}$} (the agent) and an \textbf{Environment Policy $\pi_{\text{env}}$} (the environment simulator). Unlike standard RL where the environment is fixed, \autoenv enables both to co-evolve:

\begin{itemize}
    \item $\pi_{\text{agent}}$ learns to solve tasks sampled from the current simulator.
    \item $\pi_{\text{env}}$ is rewarded for generating tasks whose difficulty is \textit{aligned} with the agent's current capability---targeting the ``zone of proximal development'' where learning is most effective~\citep{vygotsky1978mind}.
\end{itemize}

\subsection{Data-Evolving Paradigm: From Static Corpora to Adaptive Simulation}
Standard training minimizes a loss $\mathcal{L}(\theta)$ over a static distribution $\mathcal{D}_{\text{static}}$, where $\theta$ denotes the parameters of the agent. In contrast, \autoenv implements a \dataevolving{}. The training data $\mathcal{D}_t$ is generated on-the-fly by $\pi_{\text{env}}$, conditioned on the agent's historical performance. This creates a feedback loop (Figure~\ref{fig:fig2_loop}) where the simulator seeks not to defeat the agent, but to find its ``breaking points'' to facilitate learning.

\begin{figure}[t]
\centering
\includegraphics[width=\textwidth]{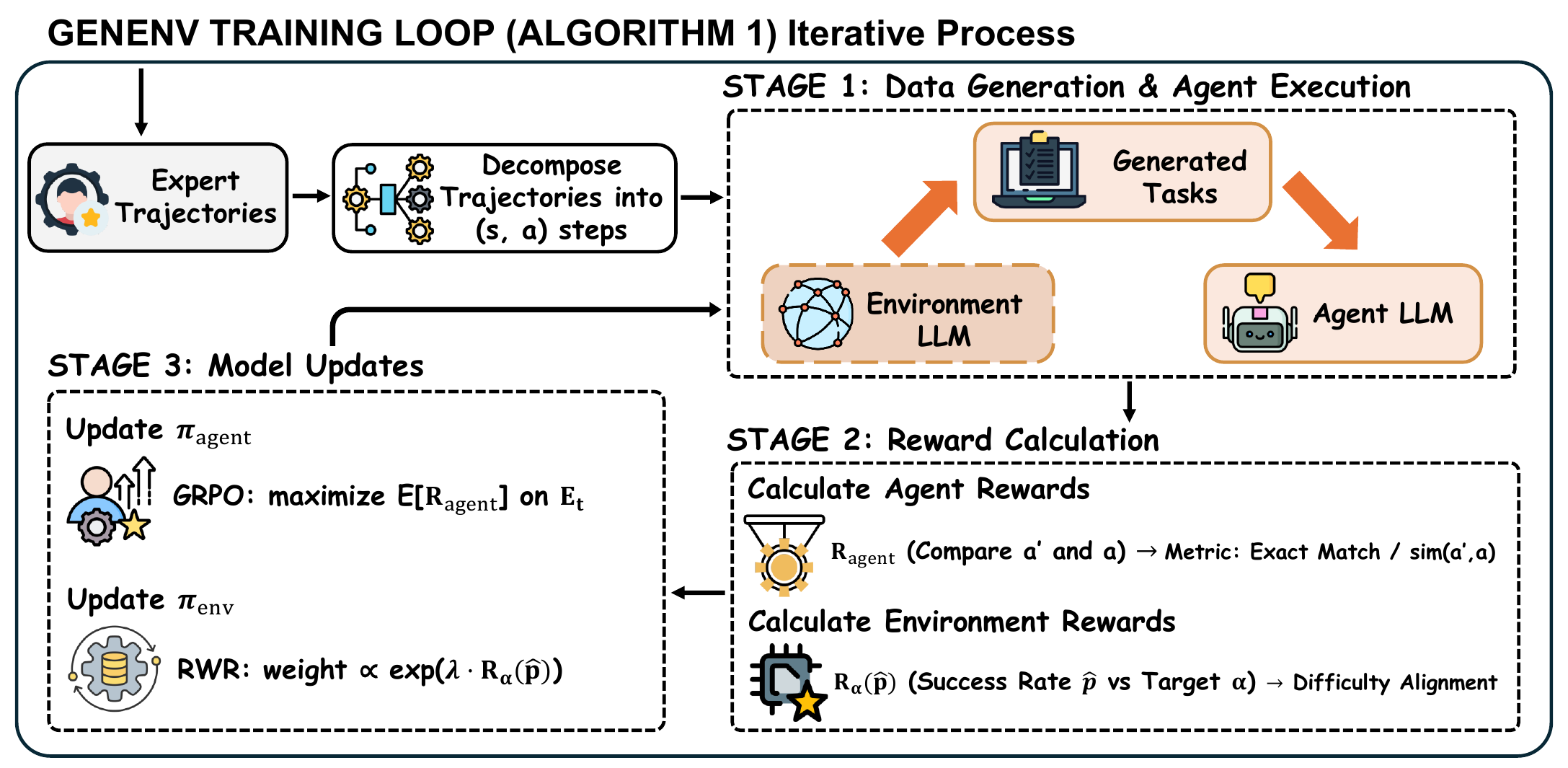}
\caption{\textbf{The \autoenv Co-Evolutionary Loop.} (1) The Environment Policy generates tasks. (2) The Agent Policy attempts them. (3) The environment reward (difficulty alignment) updates the simulator, while the agent reward (task success) updates the agent.}
\label{fig:fig2_loop}
\end{figure}

\subsection{Rewards: Agent vs.\ Environment (with explicit equation references)}
\label{sec:rewards}

\subsubsection{Agent Task Reward ($R_{\text{agent}}$)}
Each environment-generated task induces a target action/trajectory $a$ (e.g., a sequence of tool calls or a final answer), and the agent produces a prediction $a'$. We distinguish a structured action space $\mathcal{A}_{\text{struct}}$ (e.g., executable API calls) from free-form answers (e.g., natural language). For structured actions we can rely on exact execution; for unstructured ones we use a soft similarity score.
We define the agent reward (used to update $\pi_{\text{agent}}$) as:
\begin{equation}
\label{eq:r_agent}
    R_{\text{agent}}(a', a)
    =
    \mathbb{I}(a' = a)\cdot \mathbb{I}(a \in \mathcal{A}_{\text{struct}})
    +
    \text{sim}(a', a)\cdot \mathbb{I}(a \notin \mathcal{A}_{\text{struct}}),
\end{equation}
where $\text{sim}(a', a) \in [0,1]$ is task-dependent (e.g., normalized token-F1 or embedding similarity). In all benchmarks we scale $R_{\text{agent}}$ into $[0,1]$.

\subsubsection{Environment Difficulty-Alignment Reward ($R_{\text{env}}$)}
The core innovation for $\pi_{\text{env}}$ is a difficulty-aligned reward that targets a \emph{success-rate band} around a desired $\alpha \in (0,1)$ (we use $\alpha=0.5$). For each generated \emph{batch} of $n$ task variations, after the agent attempts them we compute the empirical success rate:
\begin{equation}
\label{eq:p_hat}
    \hat{p} = \frac{k}{n},
\end{equation}
where $k$ is the number of successes under $R_{\text{agent}}$ (Eq.~\eqref{eq:r_agent}). We then assign the environment reward (used to update $\pi_{\text{env}}$):
\begin{equation}
\label{eq:r_env}
    R_{\text{env}}(\hat{p})
    =
    \exp\!\left(-\beta(\hat{p}-\alpha)^2\right),
\end{equation}
where $\beta>0$ controls sharpness. This bell-shaped reward peaks when the agent's performance matches $\alpha$, discouraging tasks that are already mastered ($\hat{p}\to 1$) or hopeless ($\hat{p}\to 0$).
We additionally apply a \emph{difficulty filter}: task batches with $|\hat{p}-\alpha|>k_{\text{min}}$ (we use $k_{\text{min}}=0.1$) are excluded from environment updates to prevent overfitting to transient spikes.

\subsection{Data Structures and How New Training Data Is Produced}
\label{sec:data_aggregation}

A recurring confusion in co-evolution papers is \emph{where the training data actually comes from}. We therefore make the data flow explicit.

\paragraph{What the environment generates.}
At epoch $t$, the environment policy $\pi_{\text{env}}$ generates a task batch $\mathcal{T}_t$. Concretely, $\mathcal{T}_t$ contains $n$ \emph{task instances} (often multiple variations of the same seed), where each instance includes:
(i) a task prompt/context (including tool specs, constraints, and goal),
(ii) an evaluation specification (e.g., executable checker / exact-match target / reference answer),
and optionally (iii) structured ``ground truth'' target action $a$ when the benchmark provides it (e.g., tool-call arguments).

\paragraph{What the agent produces.}
The agent interacts with each task instance and yields an interaction trace (rollout) that we denote by an element $e \in \mathcal{E}_t$.
Each trace records at minimum:
\[
e = (\text{task}, \text{trajectory}, a', r),
\]
where \texttt{trajectory} can include intermediate reasoning text and tool calls, $a'$ is the final output/action, and $r=R_{\text{agent}}(a',a)$ is computed via Eq.~\eqref{eq:r_agent} (or its benchmark-specific instantiation).

\paragraph{Two growing datasets: one for the agent, one for the environment.}
We maintain two pools that grow online:
\begin{itemize}
    \item \textbf{Agent training pool $\mathcal{D}_{\text{train}}$:} stores \emph{valid} interaction traces from $\mathcal{E}_t$. A trace is ``valid'' if it is well-formed and evaluable for the benchmark (e.g., tool calls parse and execute; outputs follow required schema; checker runs without error). We append these valid tuples to $\mathcal{D}_{\text{train}}$ so the agent can (i) learn from fresh on-policy experiences and (ii) retain mastery of earlier curricula by continuing to sample from the accumulated pool.
    \item \textbf{Environment SFT pool $\mathcal{D}_{\text{env}}$:} stores environment generations used to train $\pi_{\text{env}}$ via RWR. Each record is a supervised pair of the form
    \[
        (\text{env-conditioning context} \rightarrow \text{generated task instance}),
    \]
    where the conditioning context includes the seed prompt plus any summary signals (e.g., recent success statistics) that $\pi_{\text{env}}$ conditions on. We weight each record by a monotone function of the environment reward in Eq.~\eqref{eq:r_env} (e.g., $\propto \exp(\lambda R_{\text{env}}(\hat{p}))$, where $\lambda=1.0$ is a temperature hyperparameter).
\end{itemize}

\paragraph{How this produces a ``data-evolving'' training set.}
At every epoch, both $\mathcal{T}_t$ and $\mathcal{E}_t$ are newly generated; thus $\mathcal{D}_{\text{train}}$ is not a fixed offline corpus but an evolving mixture of (a) base data, (b) previously collected valid traces, and (c) newly collected on-policy traces.
Meanwhile, $\mathcal{D}_{\text{env}}$ evolves toward generating tasks whose empirical success rate stays near $\alpha$ (Eq.~\eqref{eq:r_env}), which in turn shifts the difficulty distribution of future $\mathcal{T}_{t+1}$.
This closes the loop: \emph{new data is produced as a byproduct of interaction}, and is then explicitly \emph{aggregated} into training pools for subsequent updates.

\subsection{Two-Player Curriculum RL: Optimization Loop}
\label{sec:opt_loop}

\paragraph{Player 1 Update (Agent).} The Agent Policy $\pi_{\text{agent}}$ is updated to maximize $\mathbb{E}[R_{\text{agent}}]$ (Eq.~\eqref{eq:r_agent}). In our experiments, we instantiate this using Group Relative Policy Optimization (GRPO) \citep{shao2024deepseekmath}.
\vspace{-1em}
\paragraph{Player 2 Update (Environment).} The Environment Policy $\pi_{\text{env}}$ is updated to maximize $\mathbb{E}[R_{\text{env}}]$ (Eq.~\eqref{eq:r_env}). We implement this via \textbf{Reward-Weighted Regression} (RWR): from each environment-generated batch, we compute $\hat{p}$ (Eq.~\eqref{eq:p_hat}) from agent rollouts and assign $R_{\text{env}}(\hat{p})$. We then construct a weighted SFT set of environment generations and fine-tune $\pi_{\text{env}}$ toward higher-reward generations. For stability, we regularize updates with a KL penalty to the initial simulator and cap per-step updates by a maximum KL threshold.

\begin{algorithm}
\caption{\textsc{\autoenv Co-Evolutionary Loop} (with explicit equation references)}
\label{alg:ces}
\begin{algorithmic}[t]
\State \textbf{Initialize:} Agent $\pi_{\text{agent}}$, Environment $\pi_{\text{env}}$, Agent pool $\mathcal{D}_{\text{train}}$, Env pool $\mathcal{D}_{\text{env}}$.
\For{epoch $t = 1, \dots, T$}
    \State \textbf{\(\rhd\) Phase 1: \texttt{Online Generation \& Interaction}}
    \State Environment generates a task batch $\mathcal{T}_t \sim \pi_{\text{env}}(\cdot)$.
    \State Agent $\pi_{\text{agent}}$ rolls out on $\mathcal{T}_t$ to obtain traces $\mathcal{E}_t$ and per-trajectory agent rewards $R_{\text{agent}}$ via \eqref{eq:r_agent}.
    \State Compute batch success $\hat{p}$ via \eqref{eq:p_hat} and assign environment reward $R_{\text{env}}(\hat{p})$ via \eqref{eq:r_env}.
    
    \State \textbf{\(\rhd\) Phase 2: \texttt{Dual Update (Two Players, Two Objectives)}}
    \State Update \textbf{agent} $\pi_{\text{agent}}$ via GRPO to maximize $\mathbb{E}[R_{\text{agent}}]$ (\eqref{eq:r_agent}).
    \State Filter out batches with $|\hat{p}-\alpha|>k_{\text{min}}$ for environment updates.
    \State Build weighted env SFT set $\tilde{\mathcal{D}}_{\text{env}}^t$ from $\mathcal{T}_t$ with weights $\propto \exp(\lambda R_{\text{env}}(\hat{p}))$ (\eqref{eq:r_env}).
    \State Update \textbf{environment} $\pi_{\text{env}}$ via RWR on $\tilde{\mathcal{D}}_{\text{env}}^t$ to maximize $\mathbb{E}[R_{\text{env}}]$ (\eqref{eq:r_env}).
    
    \State \textbf{\(\rhd\) Phase 3: \texttt{Aggregation (How New Data Enters Training)}}
    \State Extract valid traces from $\mathcal{E}_t$ (e.g., parseable/executable/checker-passed) and append to agent pool:
    \State \hspace{1.2em} $\mathcal{D}_{\text{train}} \leftarrow \mathcal{D}_{\text{train}} \cup \text{Valid}(\mathcal{E}_t)$.
    \State Append weighted environment generations to env pool:
    \State \hspace{1.2em} $\mathcal{D}_{\text{env}} \leftarrow \mathcal{D}_{\text{env}} \cup \tilde{\mathcal{D}}_{\text{env}}^t$.
\EndFor
\end{algorithmic}
\end{algorithm}

\section{Theoretical Analysis of Difficulty-Aligned \autoenv}
\label{sec:theory}

In this section we provide a simple theoretical analysis of the
difficulty-aligned co-evolution mechanism in \autoenv. Our goal is not
to fully characterize the dynamics of large LLMs, but to clarify why
(1) tasks whose success rate is close to the target band $\alpha$
carry the strongest learning signal for the Agent Policy
$\pi_{\text{agent}}$, and (2) the $\alpha$-Curriculum Reward
$R_{\text{env}}$ provides a statistically consistent signal for the
Environment Policy $\pi_{\text{env}}$ to rank task types by how well
their difficulty matches the current agent.

\subsection{Intermediate Difficulty Maximizes Agent Learning Signal}
\label{sec:theory-gradient}

We first consider a stylized bandit setting in which the Agent Policy
$\pi_{\text{agent}}$ interacts with a single environment-generated
task type $\tau$ (e.g., a family of API-calling problems of similar
difficulty). The outcome of each attempt is a scalar reward
$r \in \{0,1\}$, where $r=1$ denotes success on the task and $r=0$
denotes failure.\footnote{The analysis extends to
$R_{\text{agent}} \in [0,1]$ by rescaling; we use the binary case for
clarity.}
For a fixed Agent Policy parameterization $\theta$, let
$p(\tau) = \Pr(r=1 \mid \tau, \theta)$ denote the success probability
on task type $\tau$. The Agent Policy is updated with a REINFORCE-style estimator
\begin{equation}
    g(\tau, r)
    = (r - b(\tau)) \nabla_\theta \log \pi_\theta(a \mid \tau),
    \label{eq:reinforce}
\end{equation}
where $a$ is the sampled action (e.g., a rollout trajectory of tool-calling
sequence) and $b(\tau)$ is a baseline (e.g., an estimate of the
expected reward on $\tau$).
The quantity $\mathbb{E}[\|g(\tau, r)\|^2]$ can be viewed as measuring
how strong the stochastic gradient signal is for this task type.

Given the trust-region KL constraint and the gradient-clipping bias in GRPO-style policy updates, it is expected that the squared norm of the score function remains within a trust-region bound. This behavior has been discussed in recent analyses of one-step policy updates (e.g., \citet[Theorem 3.2]{chen2025exploration}) as well as in the literature on trust-region policy optimization \citep{schulman2015trust}. Accordingly, we establish our theoretical analysis under the reasonable assumption that the squared norm of the score function $\nabla_\theta \log \pi_\theta(a \mid \tau)$ does not vary too dramatically when conditioned on the binary outcome $r$.

\begin{assumption}[Bounded score variation]
\label{assump:bounded-score}
For a fixed task type $\tau$, there exist constants
$0 < c_{\min} \le c_{\max} < \infty$ such that
\[
    c_{\min}
    \;\le\;
    \mathbb{E}\!\left[
        \big\| \nabla_\theta \log \pi_\theta(a \mid \tau) \big\|^2
        \,\middle|\, r
    \right]
    \;\le\;
    c_{\max}
    \quad
    \text{for both } r=0 \text{ and } r=1.
\]
\end{assumption}

We take the baseline to be the on-task expected reward,
$b(\tau) = \mathbb{E}[r \mid \tau] = p(\tau)$, which is the variance
minimizer in the standard REINFORCE analysis. Under these conditions we
obtain the following result.

\begin{proposition}[Intermediate difficulty maximizes gradient signal]
\label{prop:info}
Suppose Assumption~\ref{assump:bounded-score} holds and the baseline is
chosen as $b(\tau) = p(\tau)$. Then there exist positive constants
$C_{\min}$ and $C_{\max}$, independent of $p(\tau)$, such that
\begin{equation}
    C_{\min} \, p(\tau)\big(1 - p(\tau)\big)
    \;\le\;
    \mathbb{E}\big[\|g(\tau, r)\|^2\big]
    \;\le\;
    C_{\max} \, p(\tau)\big(1 - p(\tau)\big).
    \label{eq:variance-bound}
\end{equation}
In particular, up to constant factors, the expected squared gradient
norm is proportional to $p(\tau)\big(1 - p(\tau)\big)$, which is
maximized when $p(\tau) = 1/2$, i.e., for tasks of intermediate
difficulty.
\end{proposition}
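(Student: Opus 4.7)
The plan is to directly evaluate $\mathbb{E}[\|g(\tau,r)\|^2]$ by conditioning on the binary outcome $r$, since both the multiplicative factor $(r-b(\tau))^2$ and (via Assumption~\ref{assump:bounded-score}) the conditional expectation of the squared score are tractable once $r$ is fixed. The variance-minimizing choice $b(\tau)=p(\tau)$ makes the scalar factor collapse into two clean cases, after which the arithmetic reduces to the familiar Bernoulli-variance expression $p(\tau)(1-p(\tau))$. Sandwiching the conditional score norms between $c_{\min}$ and $c_{\max}$ then gives the two-sided bound, and a one-variable calculus argument pins the maximum at $p(\tau)=1/2$.

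\textbf{Step 1: Reduce the scalar factor.} With $b(\tau)=p(\tau)$ and $r\in\{0,1\}$, I would first observe that
\[
    (r-b(\tau))^2 \;=\;
    \begin{cases}
        (1-p(\tau))^2, & r=1,\\
        p(\tau)^2, & r=0.
    \end{cases}
\]
Since $r$ is Bernoulli with mean $p(\tau)$, this already gives $\mathbb{E}[(r-b(\tau))^2]=p(\tau)(1-p(\tau))$ by a one-line calculation, which is the quantity that must emerge in the final bound.

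\textbf{Step 2: Condition on $r$ and invoke the assumption.} Writing $\|g(\tau,r)\|^2=(r-p(\tau))^2\,\|\nabla_\theta\log\pi_\theta(a\mid\tau)\|^2$ and conditioning on $r$, I obtain
\[
    \mathbb{E}\big[\|g(\tau,r)\|^2\big]
    \;=\;
    p(\tau)(1-p(\tau))^2\,S_1 \;+\; (1-p(\tau))p(\tau)^2\,S_0,
\]
where $S_r := \mathbb{E}\bigl[\|\nabla_\theta\log\pi_\theta(a\mid\tau)\|^2\,\big|\,r\bigr]$. Note that conditioning on $r$ is essential because $a$ and $r$ are not independent (the outcome is a function of the trajectory), so I cannot simply factor out a single expectation of the score norm. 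Applying Assumption~\ref{assump:bounded-score} gives $c_{\min}\le S_r\le c_{\max}$ for $r\in\{0,1\}$, and substituting both sides yields
\[
    c_{\min}\,p(\tau)(1-p(\tau))
    \;\le\;
    \mathbb{E}\big[\|g(\tau,r)\|^2\big]
    \;\le\;
    c_{\max}\,p(\tau)(1-p(\tau)),
\]
after using $p(1-p)^2+(1-p)p^2=p(1-p)$. Setting $C_{\min}=c_{\min}$ and $C_{\max}=c_{\max}$ finishes the inequality \eqref{eq:variance-bound}.

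\textbf{Step 3: Locate the maximum.} The final claim that intermediate difficulty is optimal reduces to maximizing the scalar $p\mapsto p(1-p)$ on $[0,1]$, whose unique maximizer is $p=1/2$ by elementary calculus (or AM–GM). I would state this as a one-line corollary of the bound.

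\textbf{Anticipated obstacle.} The calculation itself is mechanical; the only substantive step is justifying that the expectation factors correctly under the dependence between $a$ and $r$, which is exactly what Assumption~\ref{assump:bounded-score} is designed to absorb. A subtler point worth flagging in the write-up is that the bound holds with a single pair $(C_{\min},C_{\max})$ uniformly in $p(\tau)$ only because the assumption bounds $S_r$ by constants independent of $p(\tau)$; if instead $c_{\min},c_{\max}$ were allowed to depend on $p(\tau)$, the qualitative ``peak at $1/2$'' conclusion could shift. This is the main modeling assumption to discuss when presenting the proof, rather than a technical hurdle in the derivation itself.
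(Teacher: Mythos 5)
Your proof is correct and follows essentially the same route as the paper's: condition on the Bernoulli outcome $r$, bound the conditional expected squared score norm by $c_{\min}$ and $c_{\max}$ via Assumption~\ref{assump:bounded-score}, and collapse the remaining scalar sum to $p(\tau)(1-p(\tau))$ before maximizing at $p=1/2$. The explicit two-term expansion and the identity $p(1-p)^2+(1-p)p^2=p(1-p)$ are just a spelled-out version of the paper's summation step, so no substantive difference exists.
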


\begin{proof}[Proof sketch]
With $r \in \{0,1\}$ and $b(\tau) = p(\tau)$, we have
$\mathbb{E}[(r - p(\tau))^2 \mid \tau] = \mathrm{Var}(r \mid \tau)
:= p(\tau)(1 - p(\tau))$. Using the law of total expectation and
Assumption~\ref{assump:bounded-score}, we can factor out the variation
coming from the score function up to multiplicative constants, which
yields~\eqref{eq:variance-bound}. The function $p(1-p)$ is a concave
quadratic on $[0,1]$ with a unique maximum at $p=1/2$.
A full proof is given in Appendix~\ref{app:theory-proofs}.
\end{proof}

\begin{remark}[$\frac12$-Curriculum reward] Considering the case that $\alpha = \tfrac{1}{2}$, we have the identity
\(
p(1-p) = \tfrac{1}{4} - \bigl(p - \tfrac{1}{2}\bigr)^2 .
\) Thus, maximizing the variance term $p(1-p)$ is exactly equivalent to minimizing the squared distance to the target success rate
$\alpha = \tfrac{1}{2}$. In \autoenv, the Environment Policy does not observe
the true success probability $p(\tau)$ but only an empirical estimate
$\hat{p}(\tau)$ from a finite number of rollouts. The
$\alpha$-Curriculum Reward takes the form
\begin{equation}
    R_{\text{env}}(\hat{p}(\tau))
    = \exp\!\big(\!-\beta (\hat{p}(\tau) - \alpha)^2\big),
\end{equation}
which is a monotone transformation of
$-(\hat{p}(\tau) - \alpha)^2$ and therefore encourages the simulator
to propose tasks whose empirical success rate stays close to $\alpha$.
Proposition~\ref{prop:info} then suggests that, in expectation, this
aligns the simulator with task types that provide the strongest
learning signal for $\pi_{\text{agent}}$.\end{remark}

\subsection{Ranking Consistency of the $\alpha$-Curriculum Reward}
\label{sec:theory-ranking}

We next show that, despite relying on noisy empirical success rates,
the $\alpha$-Curriculum Reward provides a statistically consistent
signal for ranking task types by how well their difficulty matches
the target band. The argument is based on standard concentration
inequalities.

Consider two task types $\tau_1$ and $\tau_2$. For a fixed Agent Policy
$\pi_{\text{agent}}$, let $p_i = p(\tau_i)$ denote the true success
probability on $\tau_i$, and define their distances to the target band
$\alpha$ as
\begin{equation}
    \Delta_i = |p_i - \alpha|, \quad i \in \{1,2\}.
\end{equation}
Without loss of generality, assume $\Delta_1 < \Delta_2$, i.e.,
$\tau_1$ is closer to the target difficulty than $\tau_2$.
For each $\tau_i$ we run $n_i$ independent rollouts and compute the
empirical success rate $\hat{p}_i = k_i / n_i$, where $k_i$ is the
number of successes. The Environment Policy receives the reward
\begin{equation}
    R_{\text{env}}(\hat{p}_i)
    = \exp\big(-\beta(\hat{p}_i - \alpha)^2\big).
\end{equation}
Since the exponential is monotone, ranking tasks by $R_{\text{env}}$ is
equivalent to ranking them by their squared distance
$(\hat{p}_i - \alpha)^2$ to the target band.

The following theorem shows that the mis-ranking probability decays
exponentially in the minimum number of rollouts.

\begin{theorem}[Ranking consistency of $R_{\text{env}}$]
\label{thm:ranking}
Let $n = \min\{n_1, n_2\}$ and
$\Delta_1 < \Delta_2$ as above. Define
$\delta = (\Delta_2 - \Delta_1)/3 > 0$.
Then
\begin{equation}
    \Pr\big(R_{\text{env}}(\hat{p}_1) \le R_{\text{env}}(\hat{p}_2)\big)
    \;\le\;
    4 \exp\!\left(
        - \frac{2}{9} (\Delta_2 - \Delta_1)^2 \, n
    \right).
\end{equation}
In particular, as $n \to \infty$ the reward ranking is consistent:
tasks whose true success probability lies closer to the target band
$\alpha$ receive higher $\alpha$-Curriculum Reward with probability
approaching $1$ at an exponential rate.
\end{theorem}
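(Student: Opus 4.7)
The plan is to reduce the ranking event to a two-sided deviation event for the empirical success rates and then invoke Hoeffding's inequality. As the first step, I would exploit the fact that $R_{\text{env}}(\hat{p}) = \exp(-\beta(\hat{p}-\alpha)^2)$ is a strictly decreasing function of $|\hat{p}-\alpha|$, so the mis-ranking event $\{R_{\text{env}}(\hat{p}_1) \le R_{\text{env}}(\hat{p}_2)\}$ is exactly the event $\{|\hat{p}_1 - \alpha| \ge |\hat{p}_2 - \alpha|\}$. The sharpness parameter $\beta$ drops out entirely, and only the absolute distances to the target band matter. This reduces the problem to comparing two random scalars whose means are $\Delta_1 < \Delta_2$.

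Next, I would use the slack parameter $\delta = (\Delta_2 - \Delta_1)/3$ together with the reverse triangle inequality to translate concentration of $\hat{p}_i$ around $p_i$ into concentration of $|\hat{p}_i - \alpha|$ around $\Delta_i$. On the "good" event $G := \{|\hat{p}_1 - p_1| < \delta\} \cap \{|\hat{p}_2 - p_2| < \delta\}$, the reverse triangle inequality yields $|\hat{p}_1 - \alpha| < \Delta_1 + \delta$ and $|\hat{p}_2 - \alpha| > \Delta_2 - \delta$. Because $\Delta_2 - \delta - (\Delta_1 + \delta) = (\Delta_2 - \Delta_1)/3 > 0$, the correct ranking $|\hat{p}_1 - \alpha| < |\hat{p}_2 - \alpha|$ is guaranteed on $G$. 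Contrapositively, the mis-ranking event is contained in the bad event $G^c = \{|\hat{p}_1 - p_1| \ge \delta\} \cup \{|\hat{p}_2 - p_2| \ge \delta\}$.

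Finally, I would close the argument with Hoeffding's inequality for Bernoulli means. Each $\hat{p}_i$ is an average of $n_i$ independent $\{0,1\}$-valued rollout outcomes with mean $p_i$, so $\Pr(|\hat{p}_i - p_i| \ge \delta) \le 2\exp(-2 n_i \delta^2)$. A union bound combined with $n = \min\{n_1, n_2\}$ gives $\Pr(\text{mis-rank}) \le \Pr(G^c) \le 4\exp(-2 n \delta^2)$, and substituting $\delta^2 = (\Delta_2 - \Delta_1)^2/9$ yields exactly the stated bound.

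The main "obstacle" here is really just the bookkeeping of the slack parameter rather than any deep estimate: the argument goes through for any $\delta \in (0, (\Delta_2 - \Delta_1)/2)$, and the choice $\delta = (\Delta_2 - \Delta_1)/3$ is made to deliver clean constants ($4$ in the prefactor and $2/9$ in the exponent) while leaving symmetric room on each side. One could in principle sharpen the constants by swapping Hoeffding for a variance-aware Bernstein bound, since $\mathrm{Var}(r \mid \tau_i) = p_i(1-p_i) \le 1/4$, but this refinement is orthogonal to the exponential consistency conclusion the theorem targets and so is not needed here.
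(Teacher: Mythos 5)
Your proposal is correct and follows essentially the same route as the paper's proof: reduce the mis-ranking event to $\{|\hat{p}_1-\alpha|\ge|\hat{p}_2-\alpha|\}$ via monotonicity, show the correct ranking holds on the good event where both empirical rates are within $\delta=(\Delta_2-\Delta_1)/3$ of their means, and bound the complement by a union bound plus Hoeffding to get $4\exp(-2n\delta^2)$. The remarks on the flexibility of $\delta$ and a possible Bernstein refinement are accurate but not needed.
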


\begin{proof}[Proof sketch]
Because the exponential is monotone,
$R_{\text{env}}(\hat{p}_1) \le R_{\text{env}}(\hat{p}_2)$ is equivalent to
$|\hat{p}_1 - \alpha| \ge |\hat{p}_2 - \alpha|$.
We show that if both empirical estimates $\hat{p}_i$ lie within
$\delta$ of their true means $p_i$, then necessarily
$|\hat{p}_1 - \alpha| < |\hat{p}_2 - \alpha|$ and hence
$R_{\text{env}}(\hat{p}_1) > R_{\text{env}}(\hat{p}_2)$.
Thus a mis-ranking can only occur when at least one empirical mean
deviates from its expectation by more than $\delta$, which can be
bounded using Hoeffding's inequality for Bernoulli random variables.
A detailed proof is provided in Appendix~\ref{app:theory-proofs}.
\end{proof}

\paragraph{Implications for \autoenv.}
Theorem~\ref{thm:ranking} shows that, even though the Environment
Policy only observes noisy empirical success rates $\hat{p}_i$ derived
from a finite number of rollouts, the $\alpha$-Curriculum Reward is a
statistically consistent proxy for task difficulty. As we increase the
rollout budget per task type, the environment LLM can more reliably
identify and up-weight task families whose difficulty lies in the
target zone of proximal development. This provides a formal
justification for the empirical convergence behaviour observed in
Figure~\ref{fig:difficulty_convergence}, where the agent's success rate on
simulated tasks concentrates around a band centered at $\alpha = 0.5$.
Together, Proposition~\ref{prop:info} and Theorem~\ref{thm:ranking}
clarify why aligning the simulator with an intermediate success rate
both maximizes learning signal for the agent and yields a stable,
difficulty-calibrated curriculum.

\begin{table}[t]
\centering
\caption{\textbf{Main Results.} Comparison on five benchmarks. Models are grouped by size: \textbf{Large Scale Models} ($>10$B) are sorted by size descending, followed by \textbf{7B Models}. Bold numbers indicate the best performance within each group. \textbf{GenEnv (7B)} significantly outperforms other 7B baselines and even surpasses the average performance of several 72B/405B models on this suite.}
\resizebox{0.9\textwidth}{!}{%
\begin{tabular}{lcccccc}
\toprule
\textbf{Model} & \textbf{ALFWorld} & \textbf{BFCL} & \textbf{API-Bank} & \textbf{Bamboogle} & \textbf{TravelPlanner} & \textbf{Average} \\
\midrule
\multicolumn{7}{l}{\textit{Large Scale Models}} \\
\midrule
Llama 3.1 405B & \textbf{65.3} & 5.5 & \textbf{74.4} & \textbf{77.6} & 16.5 & 47.9 \\
GPT-OSS 120B & 60.4 & 21.9 & 53.6 & 29.6 & 14.7 & 36.0 \\
Qwen 2.5 72B & 63.5 & \textbf{35.3} & 54.9 & 69.6 & 20.5 & \textbf{48.8} \\
Llama 3.1 70B & 60.1 & 13.4 & 64.3 & 76.8 & 17.6 & 46.4 \\
Qwen 3 32B & 52.3 & 33.8 & 63.8 & 71.2 & \textbf{22.5} & 48.7 \\
GPT-OSS 20B & 53.6 & 24.4 & 41.2 & 33.6 & 14.9 & 33.5 \\
Qwen 3 14B & 37.8 & 29.4 & 66.7 & 76.0 & 14.7 & 44.9 \\
\midrule[1.5pt]
\multicolumn{7}{l}{\textit{7B Models}} \\
\midrule
ReSearch & 18.7 & 5.0 & 65.3 & 68.0 & 16.4 & 34.7 \\
SearchR1 & 16.1 & 5.0 & 63.3 & 67.2 & 16.1 & 33.5 \\
Qwen 2.5 7B & 14.2 & 7.0 & 61.6 & 68.0 & 14.3 & 33.0 \\
ToRL & 8.0 & 0.0 & 54.1 & 34.4 & 14.8 & 22.3 \\
\rowcolor{lightblue} \textbf{GenEnv (Ours)} & \textbf{54.5} & \textbf{41.8} & \textbf{79.1} & \textbf{76.0} & \textbf{16.6} & \textbf{53.6} \\
\bottomrule
\end{tabular}
}

\label{tab:results}
\end{table}
\section{Experiments}
\label{sec:experiments}

\subsection{Experimental Setup}
\paragraph{Backbone Models.}
Unless otherwise specified, all 7B agents and simulators are initialized from Qwen2.5-7B-Instruct~\citep{yang2024qwen2_5}. For large-scale baselines in Table~\ref{tab:results}, we include Llama 3.1 models~\citep{meta2024llama3}, GPT-OSS open-weight models~\citep{openai2025gptoss}, and Qwen3 models~\citep{yang2025qwen3}. These models are evaluated using the same tool-calling interface and prompt templates as our 7B baselines for fairness.
\vspace{-1em}
\paragraph{Benchmarks.} We evaluate across 5 diverse benchmarks that span tool use, embodied interaction, and real-world planning. 
\textbf{API-Bank}~\citep{li2023apibank} measures function-calling and tool-augmented reasoning, and \textbf{Bamboogle} is a compositional multi-hop QA benchmark built on top of the framework from \citet{press2022measuring}; for both, we follow the evaluation protocols from ToRL~\citep{li2025torl}. 
\textbf{ALFWorld}~\citep{shridhar2020alfworld} aligns textual instructions with embodied environments; we utilize the official validation set, where multi-turn tasks are decomposed into single steps for evaluation. 
\textbf{BFCL} follows the Berkeley Function-Calling Leaderboard setup~\citep{patil2024berkeley}; specifically, we evaluate on the long-context subset treating each turn independently. 
\textbf{TravelPlanner}~\citep{xie2024travelplanner} captures end-to-end planning and tool use in realistic travel scenarios; we report the average of four metrics: CS Micro (\%), CS Macro (\%), HD Micro (\%), and HD Macro (\%). The \textbf{Average} column in Table~\ref{tab:results} is the unweighted mean of these per-benchmark success metrics.
\vspace{-0.5em}
\paragraph{Baselines \& Variants.} We compare against standard instructed models (Qwen2.5-7B-Instruct) and specialized search-and-planning agents, including ReSearch, SearchR1, and ToRL~\citep{jin2025searchr1, li2025torl}. These methods represent strong model-evolving pipelines that either improve search policies or alignment rewards on largely static datasets. To strictly evaluate our Data-Evolving contribution, we define:
\vspace{-0.5em}
\begin{itemize}
    \item \textbf{GenEnv-Random:} The simulator generates new tasks every epoch but is \textit{not} trained via $R_{\text{env}}$. This isolates the effect of dynamic data vs.\ aligned curriculum.
    \item \textbf{GenEnv-Static:} The simulator generates a large batch of synthetic data once before training.
    \item \textbf{Gemini-Offline (2x / 3.3x):} High-quality synthetic data generated offline by Gemini 2.5 Pro (approx.\ 1.76x and 3.27x the training set size). This represents a strong ``teacher-distillation'' baseline.
\end{itemize}
\begin{figure}[t]
\centering
\includegraphics[width=0.95\textwidth]{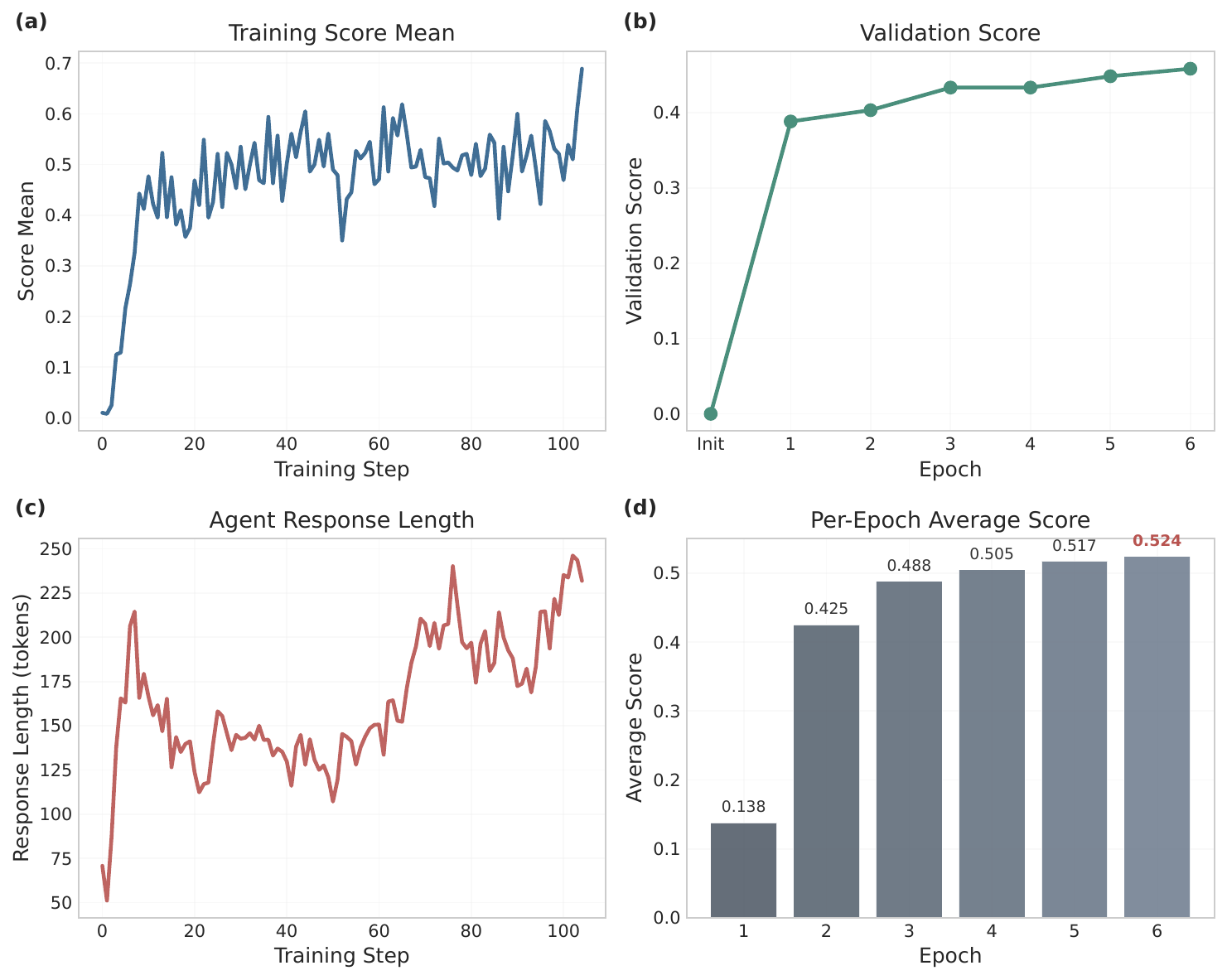}
\caption{\textbf{Training dynamics of \autoenv.} From left to right: (a) training step-wise reward (\texttt{critic/score/mean}); (b) validation score across epochs; (c) batch-level ground-truth accuracy; and (d) per-epoch average reward. The curves show that \autoenv trains stably without reward collapse or divergence, with both reward and accuracy improving smoothly over time.}
\label{fig:training_stability}
\end{figure}

\begin{figure}[h!]
\centering
\includegraphics[width=\textwidth]{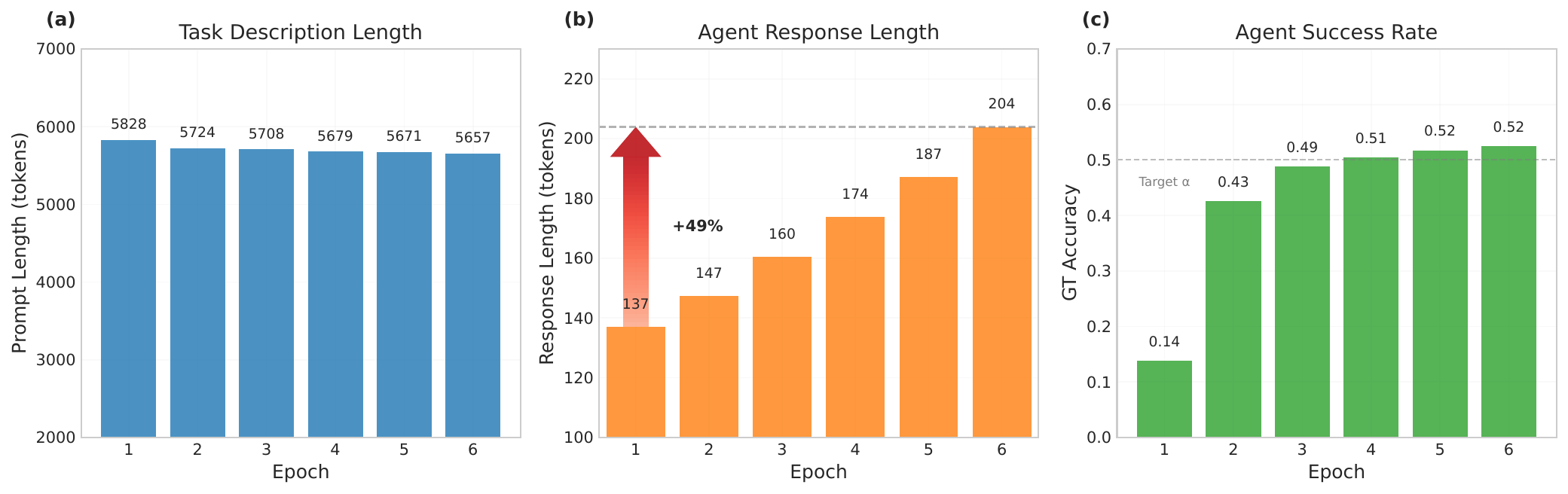}
\caption{\textbf{Emergent curriculum in \autoenv.} 
Across training epochs, the environment simulator gradually increases task complexity (a), reflected by longer task descriptions; 
the agent correspondingly produces longer reasoning chains (b) as it learns to solve harder tasks; 
and its success rate (c) remains within a controlled band despite rising difficulty. 
Together these curves show that \autoenv induces an emergent curriculum in which task difficulty and agent capability co-evolve in a stable manner.}
\label{fig:curriculum_analysis}
\end{figure}
\paragraph{Training Configuration.}
For the Agent Policy ($\pi_{\text{agent}}$), we train for 10 epochs using GRPO with batch size 64 and maximum sequence length 9{,}000 tokens (prompt + response). The Environment Policy ($\pi_{\text{env}}$) is updated at the same epoch frequency via RWR with batch size 64. We use the same optimizer family (AdamW) for both policies, with learning rates detailed in Appendix~\ref{sec:hyperparams}. All methods are trained on the same base dataset, and for \autoenv variants the additional data comes solely from the simulator.

\paragraph{Evaluation Protocol.}
All models---including large models---are evaluated in a unified tool-calling framework. We use identical system prompts, tool specifications, and decoding settings for all models on a given benchmark. We do not attach additional multi-agent orchestration or human-in-the-loop corrections to any method, to focus the comparison on training and data regimes.

\paragraph{Summary of Results.} Across all benchmarks, \autoenv improves the 7B base agent significantly, with gains up to \textbf{+40.3\%} on ALFWorld and \textbf{+20.4\%} on API-Bank compared to strong baselines.

\subsection{RQ1: Does \autoenv Improve Downstream Task Performance?}
Table~\ref{tab:results} presents the main comparison. \autoenv consistently outperforms both general-purpose models and specialized RL agents across all five benchmarks. Notably, on \textbf{ALFWorld}, which requires long-horizon planning, \autoenv achieves 54.5\% accuracy compared to 14.2\% for the base model, demonstrating the power of the simulator in generating diverse embodied scenarios that are costly to collect in the real world. On API-Bank and BFCL, \autoenv achieves 79.1\% and 41.8\% success respectively, markedly improving over other 7B baselines that rely on static data or non-adaptive exploration.

Beyond absolute performance, \autoenv also closes much of the gap to substantially larger models. The average score of \autoenv (53.6) is competitive with---and in many cases exceeds---that of 14B–72B models that do not benefit from a difficulty-aligned simulator. This supports our central claim that \emph{how} data is generated and aligned with the agent matters as much as, or more than, simply scaling model size or collecting larger static datasets. In the remainder of the section, we investigate how the co-evolutionary process shapes the curriculum, data efficiency, and difficulty calibration of the environment.

We also verify that the co-evolutionary training process itself is well behaved. Figure~\ref{fig:training_stability} plots the training dynamics of \autoenv: the per-step GRPO reward and batch-level ground-truth accuracy both increase steadily, and the validation score improves monotonically before saturating, without signs of reward hacking or instability. This suggests that our difficulty-aligned simulator can be optimized jointly with the agent using standard policy gradients, without introducing pathological oscillations during training.

\subsection{RQ2: Does \autoenv Learn to Tackle Harder Tasks Over Time?}
We next investigate whether the environment simulator actually learns a curriculum, rather than simply generating random variations. To this end, we use the average length of the agent's required response as a proxy for reasoning complexity and task difficulty, and track how it evolves throughout training. Figure~\ref{fig:curriculum_analysis} plots this average response length together with the agent's success rate on simulated tasks across epochs.

\finding{The required response length increases from 137 to 204 tokens (+49\%) by epoch 6. This confirms that $\pi_{\text{env}}$ learns to generate progressively more complex reasoning challenges as $\pi_{\text{agent}}$ becomes more capable, creating an emergent curriculum without manual design. At the same time, the agent's success rate does not collapse; instead, it grows in tandem with task complexity, suggesting that the simulator is adapting difficulty in a controlled way rather than simply making tasks arbitrarily harder.}

This pattern is consistent with our design of the \alphacurr: as the agent improves on a given family of tasks, their success rate on that family moves away from the target band around $\alpha$, reducing its contribution to $R_{\text{env}}$ and encouraging the simulator to propose harder variations. The observed increase in response length indicates that these harder tasks require more extensive multi-step reasoning and tool use, rather than superficial changes to surface form. Qualitatively, we observe that later tasks tend to involve more complex compositions of tools and deeper chains of intermediate subgoals.

Finally, the fact that curriculum emerges \emph{without any hand-specified difficulty schedule} supports our broader view of \autoenv as a data-evolving system: the environment learns where the agent's ``breaking points'' are and adapts task generation accordingly. This contrasts with conventional curriculum learning, which typically relies on fixed heuristics or manually designed difficulty levels. Here, difficulty is inferred directly from the agent's behaviour via $R_{\text{env}}(\hat{p})$.

\subsection{RQ3: Is \autoenv More Data-Efficient Than Gemini-Based Augmentation?}
\begin{figure}[h]
\centering
\begin{subfigure}{0.48\textwidth}
    \centering
    \includegraphics[width=\linewidth]{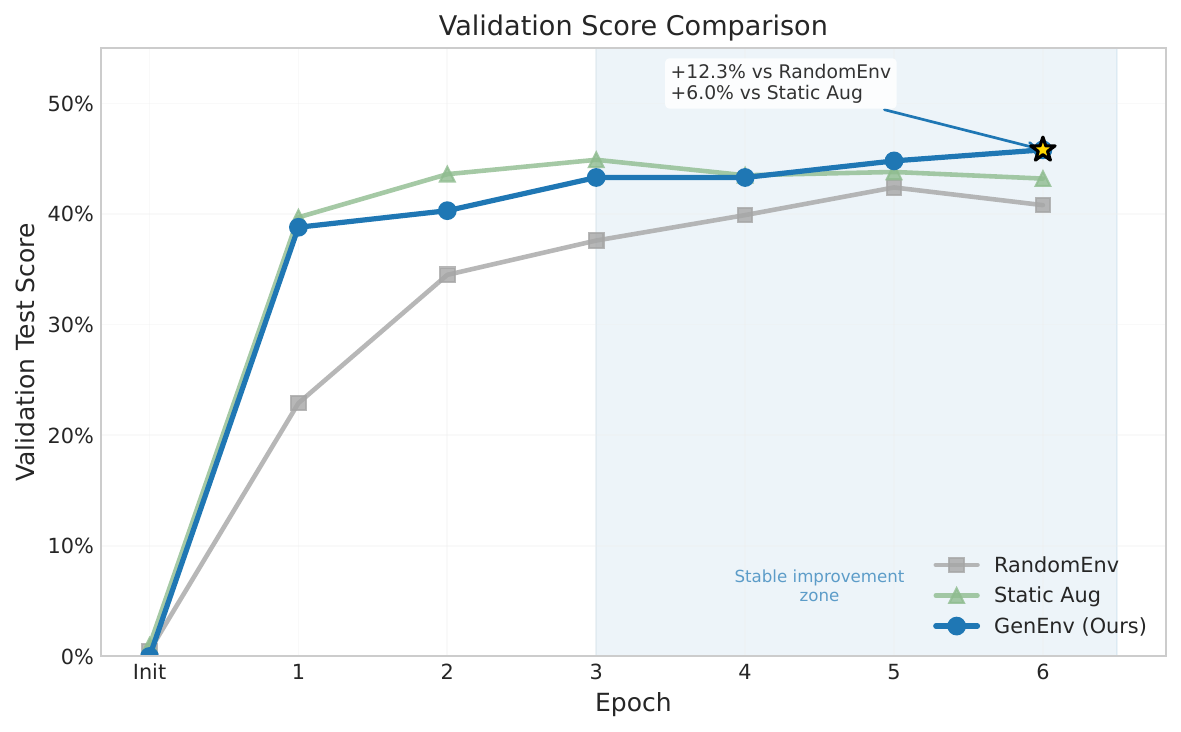}
    \caption{\textbf{Method comparison.} \autoenv outperforms both static Gemini-based augmentation and the GenEnv-Random variant, showing that \emph{how} data is generated and aligned with the agent matters more than simply adding more offline synthetic data.}
    \label{fig:method_comparison}
\end{subfigure}
\hfill
\begin{subfigure}{0.48\textwidth}
    \centering
    \includegraphics[width=\linewidth]{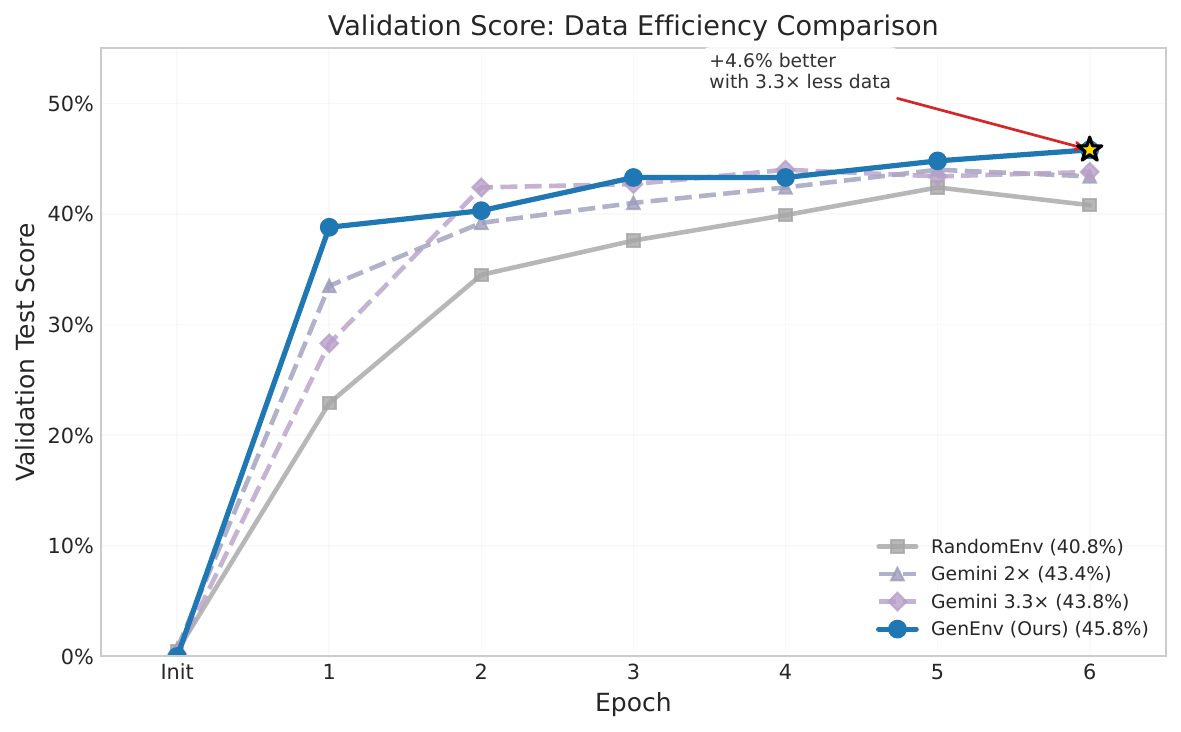}
    \caption{\textbf{Data efficiency.} \autoenv achieves higher validation performance while using substantially fewer synthetic samples than Gemini-based offline augmentation, indicating that difficulty-aligned, on-policy simulation provides more learning signal per example than untargeted teacher-generated data.}
    \label{fig:data_efficiency}
\end{subfigure}
\caption{\textbf{Static vs.\ difficulty-aligned simulation.}}
\label{fig:method_and_efficiency}
\end{figure}

\noindent A key question is whether \autoenv is simply benefiting from ``more data'' or from \emph{better-targeted} data. We compare \autoenv (which generates data on-the-fly) against offline augmentation using Gemini 2.5 Pro, under comparable training budgets. Gemini-Offline (2x / 3.3x) corresponds to large static corpora generated before training, while \autoenv continuously adapts task generation as the agent evolves.

\finding{As shown in Figure~\ref{fig:data_efficiency}, \autoenv (using 1x original data + dynamic simulation) reaches a validation score of 0.458. This outperforms Gemini-Offline (3.3x) (0.438), which uses $\approx$3.3x more synthetic data generated by a much stronger model. This confirms that \textit{targeted, difficulty-aligned} data generation can be structurally superior to massive but untargeted augmentation. Furthermore, \autoenv outperforms GenEnv-Random by 12.3\%, indicating that the $R_{\text{env}}$ optimization is critical.}

These results highlight two distinct effects. First, merely adding more synthetic trajectories from a powerful teacher model quickly encounters diminishing returns: once the static dataset ceases to match the agent's current weaknesses, additional examples provide limited new learning signal. Second, the comparison between \autoenv and GenEnv-Random controls for the presence of a simulator: both generate trajectories online, but only \autoenv trains the simulator to target the $\alpha$ band. The performance gap between these two variants isolates the benefit of difficulty alignment itself, rather than just the benefit of having a generative environment.

From a practical standpoint, these findings suggest a shift in how we invest computational and annotation budget. Instead of paying for ever larger static datasets created by stronger teachers, it may be more effective to invest in a moderately sized simulator that co-evolves with the student agent. This is especially attractive in domains where collecting real trajectories is expensive or slow, as the simulator can keep generating fresh, on-policy data without requiring repeated human involvement.

\subsection{RQ4: Does the Environment Reward Produce Well-Calibrated Difficulty?}
\begin{figure}[h]
\centering
\includegraphics[width=0.7\textwidth]{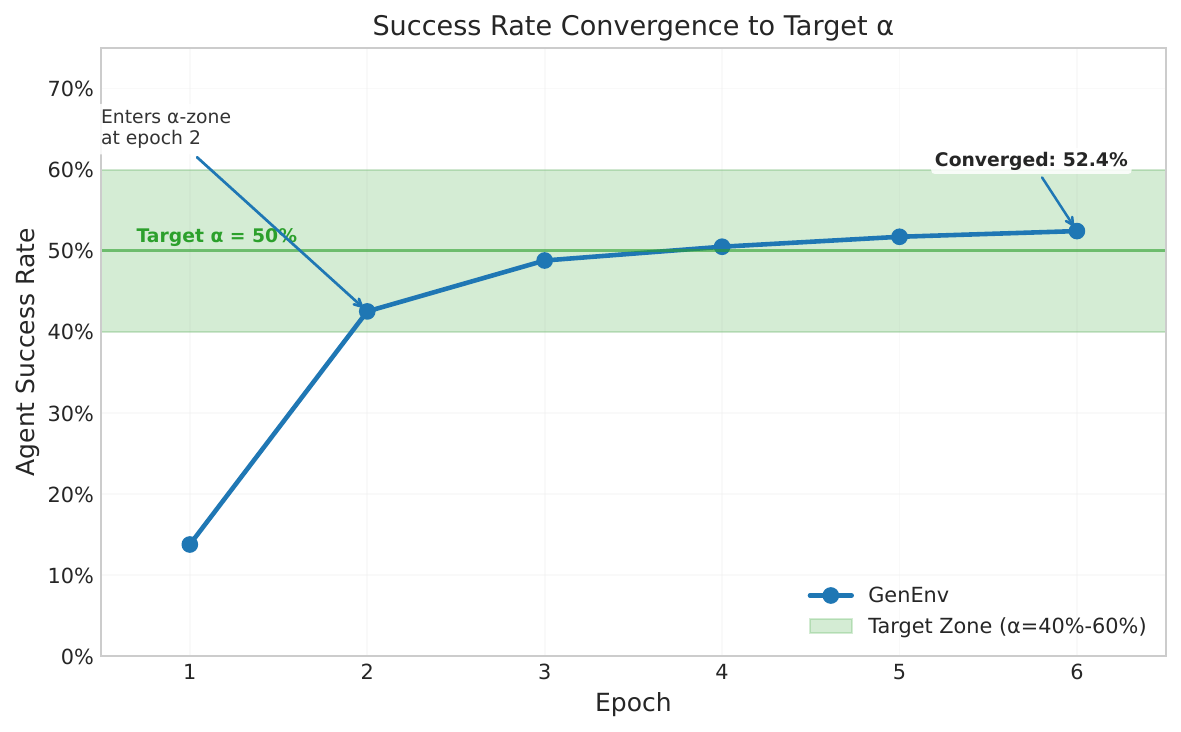}
\caption{\textbf{Difficulty calibration via the $\alpha$-Curriculum Reward.} 
As training progresses, the agent’s success rate on simulator-generated tasks converges to the target difficulty band 
(centered at $\alpha = 0.5$), demonstrating that the environment policy reliably adapts task difficulty to match the agent’s current capability. 
This empirically verifies the theoretical ranking consistency of $R_{\text{env}}$ and shows that the simulator self-calibrates to maintain tasks in the zone of proximal development.}
\label{fig:difficulty_convergence}
\end{figure}

\begin{figure}[h]
\centering
\includegraphics[width=\textwidth]{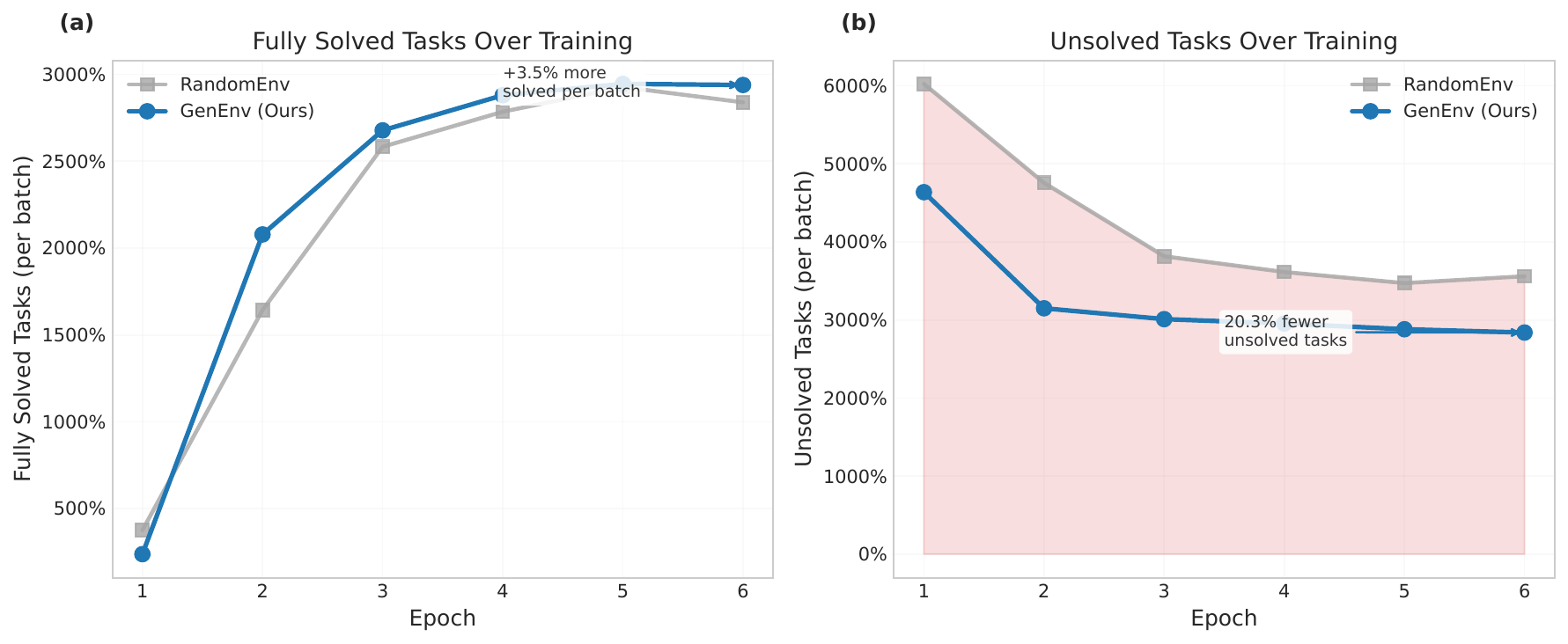}
\caption{\textbf{Problem-solving behavior during training.} 
(a) \autoenv consistently increases the proportion of fully solved tasks per batch, surpassing the RandomEnv variant; 
(b) the rate of unsolved tasks decreases substantially faster under \autoenv. 
These trends show that difficulty-aligned simulation not only improves average performance but also accelerates the elimination of failure modes compared to unguided task generation.}
\label{fig:solve_analysis}
\end{figure}

\noindent Finally, we verify if the \alphacurr successfully calibrates task difficulty. During training, we track the agent's success rate on simulated tasks generated by $\pi_{\text{env}}$ and examine whether it converges to the intended band around $\alpha$. Figure~\ref{fig:difficulty_convergence} summarizes this trajectory over training epochs.

\finding{Figure~\ref{fig:difficulty_convergence} shows the agent's success rate on generated tasks during training. Starting from 0.138, the success rate converges towards a band centered at the target difficulty $\alpha = 0.5$, remaining within a range of approximately $[0.4, 0.6]$ for most of training. This suggests that $\pi_{\text{env}}$ is actively optimizing for the ``zone of proximal development,'' avoiding the collapse into trivial or impossible tasks that plagues random generation.}

This behaviour is precisely what our theoretical analysis predicts. Theorem~\ref{thm:ranking} shows that, given enough rollouts, $R_{\text{env}}(\hat{p})$ provides a consistent ranking signal that favours task types with success probabilities closest to $\alpha$. Empirically, we see this mechanism in action: early in training, when most tasks are either too hard or too easy, the simulator updates quickly reshape the distribution towards intermediate difficulty. Later, updates become smaller as the success rate stabilizes near the target band, leading to a self-calibrated curriculum.

We also observe that the calibrated difficulty band coexists with improved downstream performance on real benchmarks. That is, the simulator does not merely ``keep the agent at 50\% success'' on synthetic tasks; rather, it continually moves the frontier of what intermediate difficulty means as the agent learns. This reinforces our view of \autoenv as a genuinely \emph{co-evolving} system, in which both the agent and the task distribution adapt in lockstep.

\section{Related Work}
\label{sec:related_work}

\subsection{Large Language Model Agents}
Recent advancements have demonstrated the ability of LLMs to function as autonomous agents. Pioneering works like ReAct, Reflexion, and Voyager have shown that combining chain-of-thought reasoning with action generation and memory evaluation enables agents to tackle complex tasks \citep{yao2022, shinn2023, wang2023, zou2025latent, guo2025role}. More recent efforts such as KnowAgent and MemBench further refine planning and memory capabilities in agentic settings \citep{zhang2024knowagent, tan2025membench}. Others, such as Toolformer and WebGPT, have focused on augmenting LLMs with external tools and web-browsing capabilities, expanding their operational scope \citep{schick2023, nakano2021}. While these models showcase strong performance, their training paradigms primarily rely on imitation learning from static, pre-collected datasets of expert trajectories \citep{nakano2021, wang2023}. This dependency on expert data forms a significant bottleneck, limiting the agent's ability to explore and discover strategies beyond the provided demonstrations \citep{shinn2023}. Our work addresses this limitation by creating a dynamic learning environment that does not solely depend on a fixed dataset.

\subsection{Trajectory Synthesis for Agent Training}
Recent efforts have focused on \textbf{generating synthetic trajectories} to address the limitations of fixed expert data for training LLM agents \citep{yu2025demystifying}. Some methods focus on \textbf{offline synthetic data generation}, creating novel trajectories to increase diversity and coverage where expert data is sparse \citep{ye2024, ding2024}. Other approaches leverage LLMs to generate self-reflective trajectories, incorporating reflections and corrections to learn from errors \citep{chen2025}, or to provide stepwise guidance from a teacher model toward correcting mistakes \citep{chen2025stepwise}. For web agents, scalable synthesis pipelines use web tutorials or exploration-driven methods to produce large-scale \textbf{synthetic datasets} of multimodal trajectories \citep{pahuja2025, yuan2024}. Other works introduce iterative self-training for reflection \citep{yuan2025, wang2025co}, fine-tuning on massive interaction trajectories \citep{zhang2025}, step-level calibration \citep{luo2025}, and simulators for online exploration to generate high-quality feedback-driven data \citep{hoang2025}.

Beyond these, several recent frameworks further advance autonomous and adaptive trajectory synthesis. \citet{wang2025uisimulator} propose a scalable LLM-based digital environment that models user-interface transitions as structured trajectories, combining simulation and targeted scaling to expose agents to increasingly complex states. \citet{zhao2025autoplay} build upon exploration-driven task generation, using a two-stage pipeline that first explores application environments and then synthesizes executable trajectories, yielding tens of thousands of realistic multimodal interaction traces. In the data-science domain, \citet{zhang2025deepanalyze} integrate curriculum-based agentic training with a data-grounded trajectory synthesis framework to produce high-fidelity analytical workflows, allowing smaller models to outperform larger workflow-based agents. Complementarily, \citet{liu2025limi} challenge the assumption that more data is always better---demonstrating that strategically curated and high-quality trajectories can induce stronger agentic reasoning from only a handful of demonstrations. \citet{chen2025compo} also demonstrates that noisy preference data could be utilized into improve agent alignment. Finally, \citet{sun2025earlyexperience} introduce an implicit-feedback paradigm where agents learn from their own early interactions before formal reinforcement learning, leveraging self-reflection and world-modeling to bootstrap generalization from suboptimal actions.

Taken together, these developments indicate a trend toward \textbf{adaptive, self-improving trajectory synthesis}: instead of relying on static or expert-curated data, agents increasingly generate, evaluate, and refine their own experiences---closing the loop between simulation, exploration, and learning. This motivates our approach, which leverages a dynamic environment simulator to generate these adaptive experiences on-demand, directly addressing the limitations of static datasets.

\subsection{Environment Simulation}
Simulators have long been a cornerstone of reinforcement learning, especially in domains such as robotics where interacting with the real world is costly \citep{todorov2012}. In the context of LLM agents, environment simulation has emerged as a key mechanism for generating training data and evaluating agent capabilities. \citet{wang2025uisimulator} demonstrate how an LLM-powered digital world simulator can generate structured user-interface states and transitions; its targeted scaling strategy produces diverse, high-impact tasks and yields agents that rival those trained on real UIs. Complementary simulation frameworks move beyond UI tasks: \citet{zhou2025faithfulsim} introduce a scalable closed-loop simulator that samples multi-step tasks from a tool--relationship graph, simulates interactions with configurable user and environment archetypes, and evaluates procedural alignment and success. Experiments reveal that environment reliability and user archetypes are dominant factors in agent performance. Beyond task-centric environments, \citet{li2025agentsociety} integrate LLM-driven agents with a realistic societal environment and a large-scale simulation engine, generating social lives for over ten thousand agents and millions of interactions among agents and their surroundings. These works highlight the importance of faithful and diverse environment simulation in creating high-quality training data and benchmarks.

Our environment LLM diverges from traditional simulators: rather than predicting state transitions or user responses for a fixed task, it generates entire tasks and goals \emph{conditioned on the agent’s recent performance}, with an explicit objective to match a target difficulty band. This casts environment design itself as a learnable policy with its own reward signal.

\section{Conclusion}
We presented \textbf{\autoenv}, a framework that shifts agent training from a static, model-evolving process to a dynamic, \textbf{data-evolving game}. By establishing a difficulty-aligned co-evolutionary loop between an Agent Policy and an Environment Policy, \autoenv achieves superior performance and data efficiency on a diverse suite of agent benchmarks. Our results suggest that future agent training systems should move beyond larger static datasets toward adaptive, self-calibrating simulation environments. Beyond the particular instantiation studied here, we believe that difficulty-aligned simulators can serve as a general recipe for training robust LLM agents in domains where real-world exploration is costly or risky.
\newpage
\bibliography{iclr2026_conference}

\appendix
\newpage
\section{Appendix}
\subsection{Proofs for Section~\ref{sec:theory}}
\label{app:theory-proofs}

In this appendix we provide detailed proofs for the theoretical results
stated in Section~\ref{sec:theory}.

\subsubsection{Proof of Proposition~\ref{prop:info}}

Recall that for a fixed task type $\tau$ and parameter vector $\theta$,
the Agent Policy update uses the REINFORCE-style estimator
\[
    g(\tau, r)
    = (r - b(\tau)) \nabla_\theta \log \pi_\theta(a \mid \tau),
\]
and we choose the baseline to be
$b(\tau) = \mathbb{E}[r \mid \tau] = p(\tau)$.
Let $S = \nabla_\theta \log \pi_\theta(a \mid \tau)$ for brevity.
Conditioned on $\tau$, the reward $r$ is Bernoulli with success
probability $p(\tau)$, so $\mathbb{E}[r \mid \tau] = p(\tau)$ and
$\mathrm{Var}(r \mid \tau) = p(\tau)(1 - p(\tau))$.

We have
\begin{align}
    \mathbb{E}\big[\|g(\tau, r)\|^2 \mid \tau\big]
    &= \mathbb{E}\big[(r - p(\tau))^2 \, \|S\|^2 \mid \tau\big] \\
    &= \sum_{r \in \{0,1\}} \Pr(r \mid \tau)
       \, (r - p(\tau))^2 \,
       \mathbb{E}\big[\|S\|^2 \mid \tau, r\big].
\end{align}
By Assumption~\ref{assump:bounded-score}, for both $r=0$ and $r=1$ we
have
\[
    c_{\min} \le \mathbb{E}\big[\|S\|^2 \mid \tau, r\big]
    \le c_{\max}.
\]
Therefore
\begin{align}
    \mathbb{E}\big[\|g(\tau, r)\|^2 \mid \tau\big]
    &\ge c_{\min}
      \sum_{r \in \{0,1\}} \Pr(r \mid \tau) (r - p(\tau))^2 \\
    &= c_{\min} \, \mathbb{E}\big[(r - p(\tau))^2 \mid \tau\big] \\
    &= c_{\min} \, \mathrm{Var}(r \mid \tau) \\
    &= c_{\min} \, p(\tau)\big(1 - p(\tau)\big),
\end{align}
and similarly
\begin{align}
    \mathbb{E}\big[\|g(\tau, r)\|^2 \mid \tau\big]
    &\le c_{\max}
      \sum_{r \in \{0,1\}} \Pr(r \mid \tau) (r - p(\tau))^2 \\
    &= c_{\max} \, p(\tau)\big(1 - p(\tau)\big).
\end{align}
Taking $C_{\min} = c_{\min}$ and $C_{\max} = c_{\max}$ yields the
bounds in Equation~\eqref{eq:variance-bound}. Since
$p(1-p)$ is a concave quadratic on $[0,1]$ with a unique maximum at
$p=1/2$, the expected squared gradient norm is maximized (up to
constant factors) for tasks with $p(\tau) = 1/2$, i.e., tasks of
intermediate difficulty. This proves
Proposition~\ref{prop:info}.

\subsubsection{Proof of Theorem~\ref{thm:ranking}}

We restate the setting for clarity. For $i \in \{1,2\}$, the true
success probability on task type $\tau_i$ is $p_i$, and we define
$\Delta_i = |p_i - \alpha|$ with $\Delta_1 < \Delta_2$.
From $n_i$ independent rollouts we obtain the empirical success rate
$\hat{p}_i = k_i / n_i$, where $k_i$ is the number of successes.
The $\alpha$-Curriculum Reward is
\[
    R_{\text{env}}(\hat{p}_i)
    = \exp\big(-\beta(\hat{p}_i - \alpha)^2\big).
\]

Since the exponential function is strictly monotone decreasing in
$(\hat{p}_i - \alpha)^2$, we have
\[
    R_{\text{env}}(\hat{p}_1) \le R_{\text{env}}(\hat{p}_2)
    \quad\Longleftrightarrow\quad
    |\hat{p}_1 - \alpha| \ge |\hat{p}_2 - \alpha|.
\]
Thus the event that the Environment Policy mis-ranks the two tasks
(i.e., gives $\tau_1$ no larger reward than $\tau_2$) is exactly the
event
\[
    \mathcal{E}_{\text{mis}} =
    \{\, |\hat{p}_1 - \alpha| \ge |\hat{p}_2 - \alpha| \,\}.
\]

Let $n = \min\{n_1, n_2\}$ and define
\[
    \delta = \frac{\Delta_2 - \Delta_1}{3} > 0.
\]
Consider the event
\[
    \mathcal{E}_{\text{good}} =
    \big\{\, |\hat{p}_1 - p_1| \le \delta
            \text{ and }
            |\hat{p}_2 - p_2| \le \delta
    \,\big\}.
\]
We claim that on $\mathcal{E}_{\text{good}}$ we must have
$|\hat{p}_1 - \alpha| < |\hat{p}_2 - \alpha|$, and hence
$R_{\text{env}}(\hat{p}_1) > R_{\text{env}}(\hat{p}_2)$.
Indeed, by the triangle inequality,
\begin{align}
    |\hat{p}_1 - \alpha|
    &\le |p_1 - \alpha| + |\hat{p}_1 - p_1|
     \le \Delta_1 + \delta, \\
    |\hat{p}_2 - \alpha|
    &\ge |p_2 - \alpha| - |\hat{p}_2 - p_2|
     \ge \Delta_2 - \delta.
\end{align}
By the choice of $\delta$, we have
\[
    \Delta_1 + \delta
    = \Delta_1 + \frac{\Delta_2 - \Delta_1}{3}
    = \frac{2\Delta_1 + \Delta_2}{3},
    \qquad
    \Delta_2 - \delta
    = \Delta_2 - \frac{\Delta_2 - \Delta_1}{3}
    = \frac{\Delta_1 + 2\Delta_2}{3},
\]
and since $\Delta_1 < \Delta_2$, it follows that
\[
    \Delta_1 + \delta
    = \frac{2\Delta_1 + \Delta_2}{3}
    < \frac{\Delta_1 + 2\Delta_2}{3}
    = \Delta_2 - \delta.
\]
Therefore,
\[
    |\hat{p}_1 - \alpha|
    \le \Delta_1 + \delta
    < \Delta_2 - \delta
    \le |\hat{p}_2 - \alpha|
\]
on $\mathcal{E}_{\text{good}}$, which implies
$R_{\text{env}}(\hat{p}_1) > R_{\text{env}}(\hat{p}_2)$.
Consequently, $\mathcal{E}_{\text{mis}}$ can only occur on the
complement event $\mathcal{E}_{\text{good}}^{c}$, and hence
\[
    \Pr(\mathcal{E}_{\text{mis}})
    \le \Pr(\mathcal{E}_{\text{good}}^{c})
    \le \Pr\big(|\hat{p}_1 - p_1| > \delta\big)
       + \Pr\big(|\hat{p}_2 - p_2| > \delta\big),
\]
where the last inequality is a union bound.

For each $i \in \{1,2\}$, $\hat{p}_i$ is the empirical mean of $n_i$
i.i.d.\ Bernoulli random variables with mean $p_i$. By Hoeffding's
inequality,
\[
    \Pr\big(|\hat{p}_i - p_i| > \delta\big)
    \le 2 \exp(-2 n_i \delta^2)
    \le 2 \exp(-2 n \delta^2),
\]
where $n = \min\{n_1, n_2\}$. Therefore
\begin{align}
    \Pr(\mathcal{E}_{\text{mis}})
    &\le 4 \exp(-2 n \delta^2) \\
    &= 4 \exp\!\left(
        - 2 n \left(\frac{\Delta_2 - \Delta_1}{3}\right)^2
    \right) \\
    &= 4 \exp\!\left(
        - \frac{2}{9} (\Delta_2 - \Delta_1)^2 \, n
    \right),
\end{align}
which establishes the desired exponential bound and completes the
proof of Theorem~\ref{thm:ranking}.

\subsection{Hyperparameter Details}
\label{sec:hyperparams}
Table~\ref{tab:hyperparams} lists the key hyperparameters used in our experiments. Note that for the Agent ($\pi_{\text{agent}}$), we employ GRPO, while the Environment ($\pi_{\text{env}}$) uses Reward-Weighted Regression (RWR).

\begin{table}[h!]
\centering
\caption{Hyperparameters for \autoenv training.}
\label{tab:hyperparams}
\begin{tabular}{lcc}
\toprule
\textbf{Parameter} & \textbf{Agent Policy ($\pi_{\text{agent}}$)} & \textbf{Environment Policy ($\pi_{\text{env}}$)} \\
\midrule
\multicolumn{3}{c}{\textbf{Optimizer (AdamW)}} \\
Learning Rate & $1 \times 10^{-6}$ & $5 \times 10^{-7}$ \\
Batch Size & 64 & 64 \\
\midrule
\multicolumn{3}{c}{\textbf{Policy Optimization}} \\
Method & GRPO & Reward-Weighted Update (RWR) \\
Total Epochs & 10 & 10 \\
Target Difficulty $\alpha$ & N/A & 0.5 \\
Difficulty Filter $k_{\text{min}}$ & N/A & 0.1 \\
RWR Temperature $\lambda$ & N/A & 1.0 \\
\bottomrule
\end{tabular}
\end{table}

\subsection{Environment Baseline Implementation Details}
Here we detail the specific implementation of the environment variants used in Section \ref{sec:experiments}:
\begin{itemize}
    \item \textbf{GenEnv-Random:} We use the same base model (Qwen2.5-7B-Instruct) for the environment. The \texttt{autoenv.disable\_env\_training} flag is set to \texttt{True}. It generates 4 variations per prompt per epoch dynamically, but the model weights are never updated based on $R_{\text{env}}(\hat{p})$.
    \item \textbf{GenEnv-Static:} We use the \texttt{generate\_static\_augmentation.py} script to pre-generate 5 variations for each of the 544 original training samples, resulting in a fixed dataset of 3{,}264 samples. The agent is trained using standard PPO for 10 epochs.
    \item \textbf{Gemini-Offline Baselines:} We prompted Gemini 2.5 Pro to generate variations of the training data. Due to API constraints and filtering, the \textbf{Gemini-2x} setting resulted in 957 samples ($\approx$1.76x) and \textbf{Gemini-4x} resulted in 1{,}777 samples ($\approx$3.27x). These represent high-quality, but static, external data augmentation.
\end{itemize}

\end{document}